\newtheorem{myDef}{Definition}
\begin{document}\sloppy

\def\x{{\mathbf x}}
\def\L{{\cal L}}

\title{Visual Confusion Label Tree For Image Classification}
%
\name{Yuntao Liu, Yong Dou, Ruochun Jin, Rongchun Li}
\address{
National University of Defense Technology\\
National Laboratory for Parallel and Distributed Processing\\
Changsha, China, 410073 \\
liuyuntao.me@gmail.com, \{yongdou, jinruochun, rongchunli\}@nudt.edu.cn
}
%
%
%

\maketitle

\begin{abstract}
Convolution neural network models are widely used in image classification tasks. However, the running time of such models is so long that it is not the conforming to the strict real-time requirement of mobile devices. In order to optimize models and meet the requirement mentioned above, we propose a method that replaces the fully-connected layers of convolution neural network models with a tree classifier. Specifically, we construct a Visual Confusion Label Tree based on the output of the convolution neural network models, and use a multi-kernel SVM plus classifier with hierarchical constraints to train the tree classifier. Focusing on those confusion subsets instead of the entire set of categories makes the tree classifier more discriminative and the replacement of the fully-connected layers reduces the original running time. Experiments show that our tree classifier obtains a significant improvement over the state-of-the-art tree classifier by $4.3\%$ and $2.4\%$ in terms of top-1 accuracy on \emph{CIFAR-100} and \emph{ImageNet} datasets respectively. Additionally, our method achieves $124\times$ and $115\times$ speedup ratio compared with fully-connected layers on AlexNet and VGG16 without accuracy decline.
\end{abstract}
\begin{keywords}
Image classification, convolution neural network, confusion graph, label tree
\end{keywords}
\section{Introduction}
\label{sec:intro}

Deep convolution neural network(CNN) models are developing rapidly and it has evolved to the state-of-the-art technique~\cite{He2015Deep} in image classification tasks. However, when applied to real-time applications on embedded devices where the power and storage are limited, CNN models can not meet the real-time demands because of its large amount of computation. Therefore, optimizing and accelerating these CNN models on embedded devices has become a challenge.

In view of this problem, researches have proposed a variety of compression and acceleration methods such as reducing the precision of multiplication and addition operations~\cite{Courbariaux2014Training}, setting the weights and inputs to binary codes~\cite{Rastegari2016XNOR}, a skillful integration among several effective methods~\cite{Han2016Deep} and structure changing ~\cite{Lin2013Network,Springenberg2014Striving,iandola2016squeezenet}. However, the computation of a CNN model mainly induced by the computation of fully-connected(FC) layers~\cite{Han2016Deep}. The methods mentioned above mainly focus on the compression on the convolution layers and have not resolved the huge computation problem of FC layers.

Another classification method based on a tree classifier, which is an appropriate method for large-scale recognition problems with thousands of categories, has received extensive attention and substantial development. There are several methods to construct the structure of the tree classifier such as leveraging the semantic ontologies (taxonomies)~\cite{li2010building,zhao2011large,deng2009imagenet}, learning label trees~\cite{bengio2010label} and probabilistic label trees~\cite{liu2013probabilistic}, learning visual trees~\cite{fan2012quantitative,fan2015hierarchical} and enhanced visual tree~\cite{zheng2017hierarchical}. Compared with the FC layers in the CNN models, the tree classifier has the advantage of small amount of calculation and the computation complexity of the tree classifier is $\mathcal{O}(log N)$~\cite{bengio2010label}. However, there has been no work that replaces FC layers with the tree classifier because most previous work construct the structure of the tree classifier by clustering directly from the image dataset. Previous methods do not utilize the information of FC layers so the accuracy is limited, which restricts the application of tree classification in accelerating depth CNN models. Moreover, this limitation also results in the separation of research on tree classification and deep CNN models and both of them can not benefit from each other.

\cite{jinconfusion} discovered that deep CNN models have visual confusions that is similar to human beings and we believe this characteristic can be used as the metric to construct the Label Tree. Therefore, we propose to use the community detection algorithm to construct the Label Tree, called {\bf Visual Confusion Label Tree(VCLT)}. With this method, we can fully utilize the information of FC layers in the CNN models. Compared with previous Label Tree building methods, the advantage of the VCLT is that there is no need to manually set parameters and do clustering tasks during tree construction. In addition, our VCLT is constructed directly based on the features in deep CNN models so it has a more reasonable structure which is beneficial for improving the accuracy of the tree classifier. Moreover, to the best of our knowledge, VCLT is the first effort that connects the CNN model and the Label Tree directly so the tree structure fully inherits the information contained in the FC layers.

There are two main contributions in this paper as follows.
\begin{itemize}
\item \emph{Visual Confusion Label Tree:} Our construction method is based on the hierarchical community detection algorithm. Using this algorithm on the output of FC layers we can construct a tree classifier whose structure is more suitable for deep CNN models. With this method we can improve the accuracy of the tree classifier compared with previous work by $2.4\%$--$4.3\%$ and we can prove in theory.
\item \emph{Replace the FC layers with the tree classifier:} After constructing the label tree, we replace the FC layers from the deep CNN models with our VCLT and we propose an effective algorithm to train our tree classifier. With this replacement we reduce the amount of computation in FC layers by $37\times$--$124\times$ without sacrificing the accuracy of original CNN-based methods.
\end{itemize}

\section{Label Tree in a Nutshell}
\label{sec:LabeltreeNutshell}

The concept of the Label Tree was first proposed in \cite{bengio2010label} aiming at classification and a label tree is a tree $T = (N,E,F,L)$ with $n+1$ indexed nodes $N = \{0,\dots,n\}$ where $E = \{(p_{1},c_{1}),(p_{|E|},c_{|E|})\}$ is a set of edges that are ordered pairs of parent and child node indices, $F = \{f_{1},\ldots,f_{n}\}$ are label predictors and label sets $L = \{l_{0},\ldots,l_{n}\}$ are associated to each node. Except the root of the tree, all other nodes have one parent and arbitrary number of children. The label sets indicate the set of labels to which a point should belong if it arrives at the given node. Classifying an example begins at the root node and for each edge leading to a child $(s,c) \in E$ one computes the score of the label predictor $f_{c}(x)$ which predicts whether the example $x$ belongs to the set of labels $l_{c}$. One takes the most confident prediction, traverses to that child node, and then repeats the process. Classification is complete when one arrives at a node that identifies only a single label that is the predicted class. More details about Label Trees can be found in \cite{bengio2010label, fan2015hierarchical, deng2011fast}.

\section{Visual Confusion Label Tree and Training}
\label{sec:vcltTraining}

\subsection{Definition of the Visual Confusion Label Tree}
\label{sec:vcltDef}

\begin{myDef}\label{def:confusionTree}
A Visual Confusion Label Tree is a tree $T=(\mathscr{N},\mathscr{V},\mathscr{E},\mathscr{L})$ with $k$ hierarchical layers $\mathscr{N}=\{n_1,\dots,n_k\}$ where $n_{id}$ denotes the number of nodes in the $id$th layer, the node sets $\mathscr{V}=\{V_1,\dots,V_k\}$ where $V_{id}$ is a set of nodes in the $id$th layer and $V_{id} = \{v_1,\dots,v_{n_{id}}\}$, branch edges $\mathscr{E}=\{(p_1,c_1),\dots,(p_{|\mathscr{E}|},p_{|\mathscr{E}|})\}$ which are ordered pairs of parent and child node indices and labels  sets $\mathscr{L}=\{L_1,\dots,L_s\}$ where $L_{id}$ is a label set of nodes in the $id$th layer and $L_{id} = \{l_1,\dots,l_{n_{id}}\}$ where $l_s$ denotes the label set of the $s$th node in this layer.
\end{myDef}

We extend the notation in~\cite{bengio2010label} to Definition \ref{def:confusionTree}. The number $k$ in Definition \ref{def:confusionTree}, which is the number of the hierarchical layers of VCLT, is equal to the number of iterations of hierarchical community detection algorithm run on a related confusion graph(detailed in Section \ref{sec:EstablishmentOfVCLT}).

\subsection{Visual Confusion Label Tree Establishing}
\label{sec:EstablishmentOfVCLT}

\begin{algorithm}\label{alg:establishCVT}
\caption{Establish Visual Confusion Label Tree of a $N$-category classification}
\LinesNumbered
\KwIn{
A $N$-category classification model $M$; A dataset $D$;
Top concern number $\tau(\tau\leq N)$;
}
\KwOut{
The VCLT $T=(\mathscr{N},\mathscr{V},\mathscr{E},\mathscr{L})$;
}
$G\Leftarrow GenerateConfusionGraph(M,D,N,\tau)$\;
$\mathscr{P}, \mathscr{C}\Leftarrow HierarchicalCommunityDetect(G)$\;
$k\Leftarrow length(\mathscr{P})$\;
\For{i from 1 to k}{
	$\mathscr{N} = \mathscr{N}\bigcup length(\mathscr{P}[i])$\;
	$\mathscr{V} = \mathscr{V}\bigcup \mathscr{P}[i]$\;
	$\mathscr{L} = \mathscr{L}\bigcup \mathscr{C}[i]$\;
}
\For{i from 2 to k}{
	\For{j from 1 to $\mathscr{N}$[i]}{
		\ForEach{$v_s \in \mathscr{P}[i][j]$}{
			$\mathscr{E} = \mathscr{E}\bigcup e_{\mathscr{V}[i][j],v_s}$\;
		}	
	}
}
return $T$\;
\end{algorithm}

Given a dataset $D$ and its corresponding classification model $M$, Algorithm \ref{alg:establishCVT} establishes the VCLT $T$ defined in Definition \ref{def:confusionTree} using confusion graph generation and community detection algorithm. There are 3 main steps in Algorithm \ref{alg:establishCVT}. The first is using the confusion graph generation algorithm to build a confusion graph. The second is using the hierarchical community detection algorithm to reveal communities in the confusion graph. The last is establishing a VCLT with the results of the second step.

Specifically, for the function ``GenerateConfusionGraph'', we utilize the confusion graph establishing algorithm from~\cite{jinconfusion}. This algorithm firstly normalizing the top-$\tau$ classification scores of each test sample and then accumulating each normalized score to the weight of the edge that connects the labeled category and the predicted category. For the function ``HierarchicalCommunityDetect'', we use the algorithm from ~\cite{blondel1008fast}. This algorithm is an iterative algorithm and it will continue running until the modularity is converged. This function outputs the $\mathscr{P}$ and $\mathscr{C}$. The $\mathscr{P}$ is a set of arrays which refer to the community set at each iteration and the $\mathscr{P}$ has the same structure as the $\mathscr{V}$ in Definition \ref{def:confusionTree}. The $\mathscr{C}$ is almost the same as the $\mathscr{P}$ except for the member in the $\mathscr{C}$ refer to the label set of communities at each iteration. In particular, as for line$6$ in Algorithm \ref{alg:establishCVT}, we just add the mark of the communities to $\mathscr{V}$ instead of the vertexes in these communities.

\begin{figure}[t]
\begin{minipage}[t]{1.0\linewidth}
  \centering
  \centerline{\includegraphics[width=6cm]{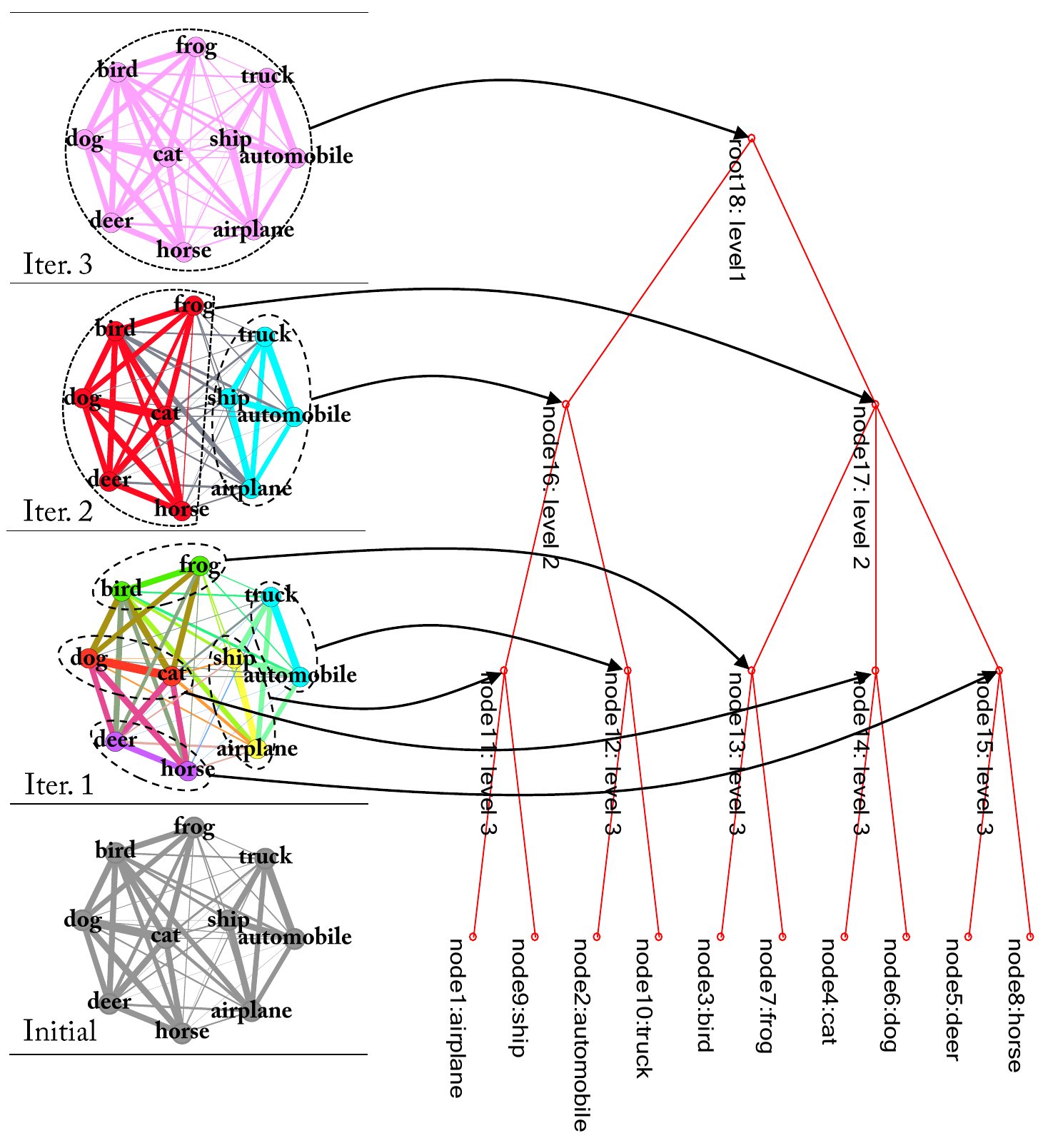}}
\end{minipage}
\caption{The construction process of the Visual Confusion Label Tree for CIFAR-10 image set.}
\label{fig:processOfEstablishCVTCIFAR10}
\end{figure}

We use the Algorithm \ref{alg:establishCVT} to construct a VCLT on \emph{CIFAR-10} dataset and the construction process is shown in Fig. \ref{fig:processOfEstablishCVTCIFAR10} where the confusion graph and the communities inside are on the left and the corresponding VCLT is on the right. The left side of Fig. \ref{fig:processOfEstablishCVTCIFAR10} is divided into four steps: Initial and Iteration $1$ to $3$. We apply the function ``GenerateConfusionGraph'' to generate a confusion graph which is shown at the Initial step. Each vertex represents one category in the dataset and the weight of each edge quantifies the confusion between two connected categories. For instance, the strongest link between ``dog'' and ``cat'' denotes that the model may highly probably confuse dogs with cats. Contrarily, the weak edge connecting ``dog'' and ``ship'' indicates that the confusion between them is weak. Then we use the function ``HierarchicalCommunityDetect'' on the confusion graph and get the community structure of the graph from fine-grained to course-grained at each iteration of this algorithm. At Iter. $1$, as is illustrated in Fig. \ref{fig:processOfEstablishCVTCIFAR10}, we get five fine-grained communities and set five corresponding nodes to the tree called ``node: level$3$''. As each member in a community refers to one category, we link the leaf nodes to level$3$ nodes. For instance, we link ``node$4$: cat'' and ``node$6$: dog'' to ``node$14$: level$3$''. At Iter. $2$, we get two coarse-grained communities based on communities detected in Iter. $1$ and each fine-grained community at Iter. $1$ is a member of the coarse-grained community at Iter. $2$. Then we link level$3$ nodes to level$2$ nodes based on this relationship. For instance, we link ``node$13$: level$3$'', ``node$14$: level$3$'' and ``node$15$: level$3$'' to ``node$17$: level$2$''. Similarly, at Iter. $3$, we link ``node$16$: level$2$'' and ``node$17$: level$2$'' to ``root$18$: level$1$'' and whole  finish the construction process.

As is proposed in~\cite{bengio2010label}, in order to achieve high classification accuracy, an ideal label tree should make the fine-grained categories contained in sibling leaf nodes under the same parent node as similar as possible while making the coarse-grained categories contained in parent nodes as dissimilar as possible. Our VCLT structure satisfies this because the categories in leaf nodes are strongly confused while those in parent nodes are weakly confused. Using Algorithm \ref{alg:establishCVT}, we also construct a VCLT on \emph{CIFAR-100} dataset shown in Fig. \ref{fig:CVTofCIFAR100}. Compared with the Enhanced Visual Tree (EVT) structure in~\cite{zheng2017hierarchical}, our VCLT is more reasonable. For example, EVT puts ``whale'', ``shark'', ``skyscraper'', ``rocket'' and ``mountain'' into one coarse-grained category while our VCLT divides them into three independent coarse-grained categories. Another example is that EVT divides ``bicycle'' and ``motorcycle'' into two different coarse-grained categories but our VCLT puts them into the same fine-grained category due to the strong visual similarity between them.


\begin{figure}[!t]
\begin{minipage}[t]{1.0\linewidth}
  \centering
  \centerline{\includegraphics[width=7cm]{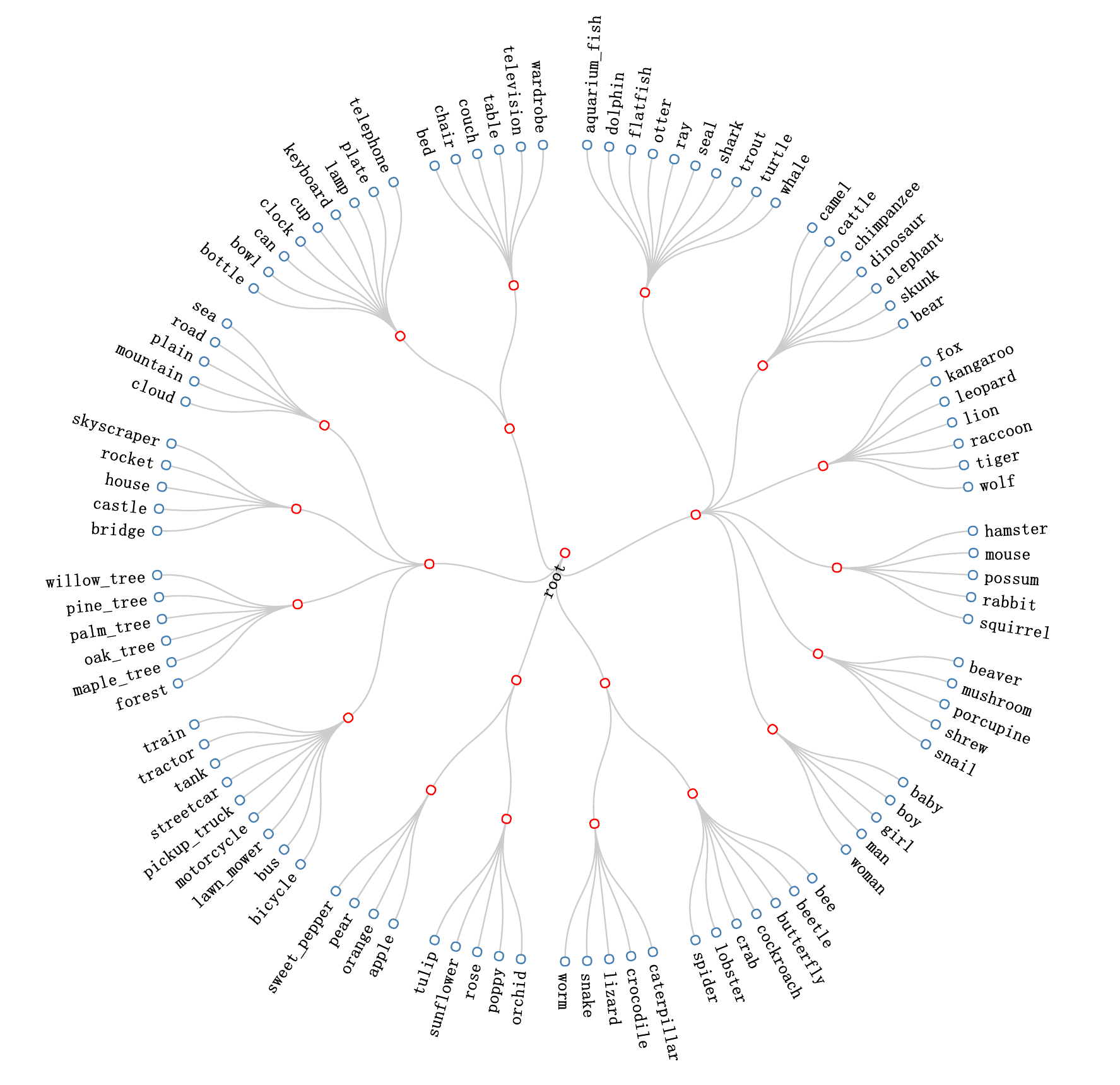}}
\end{minipage}
\caption{The Visual Confusion Label Tree for CIFAR-100 image set with 100 categories.}
\label{fig:CVTofCIFAR100}
\end{figure}

\subsection{Visual Confusion Label Tree Classifier Training}
\label{sec:vcltTrain}

Similar to \cite{fan2015hierarchical}, we develop a top-down approach to train classifiers on each node of the VCLT. One parent node contains a set of coarse-grained categories or a set of fine-grained categories. To make full use of these features, we apply a multi-kernel learning algorithm to train the classifier on each node. In order to control the inter-level error propagation, we add a constraint to our learning algorithm. The constraint aims to guarantee that an image must first be assigned to its parent node (higher-level non-leaf node) correctly if it can further be assigned to a child node (lower-level non-leaf node or leaf node). All these methods make the tree classifier over the VCLT more discriminative.

In order to discriminate a given coarse-grained or fine-grained categories on a node $c_j$ from its sibling nodes $\{s(c_j)\}$ under the same parent node $c_i=p(c_j)$, its multi-kernel SVM classifier is defined as:
\begin{equation}\label{equ:fClassifier}
f^l_{c_j}=K(\bm x,\bm x')+b
\end{equation}
where $l$ is the level of node $c_j$ and $K(\bm x,\bm x')$ is the multi-kernel which is defined as:
\begin{equation}\label{equ:kKernel}
K(\bm x,\bm x')=\sum^M_{m=1}{d_mK_m(\bm x,\bm x')}
\end{equation}
with:
\begin{equation}\label{equ:kConstraint}
d_m\geq 0, \sum^M_{m=1}{d_m}=1
\end{equation}

In our method, we use common kernels such as the linear kernel, the polynomial kernel, and the Gaussian kernel.

We train each classifier node by node from the root to leaf nodes and use the strategy of SVM Plus~\cite{Alexander2000Probabilities} to train the multi-label SVM classifiers. Specifically, there is a set of labeled training images for $R$ sibling nodes $\{s(c_j)\}$ under the same parent node $c_i=p(c_j)$, $R\in [2,B]$ (B is the number of sibling nodes) and there are training samples $\Omega = \{(x^l_j,y^l_j)|c_j\in c_i\}$ ($l$ is the level of the sibling nodes), training the multi-kernel SVM Plus classifiers for $R$ sibling nodes is achieved by optimizing a objective function:
\begin{equation}\label{equ:svmOptimizewithoutConstraint}
\min_{f_0,f_{c_j},b,\xi,d}
\frac{1}{2}{\|f_{0}\|^2}+\frac{\lambda}{2}{\sum^R_{j=1}{\sum^M_{m=1}{\frac{1}{d_m}\|f_{c_j}\|^2}}}
+C\sum^R_{j=1}{\xi_j}
\end{equation}
\textbf {subject to:}
\begin{equation}\label{equ:svmConstraintwithoutLevelError}
\forall^R_{j=1}: y_jf_{c_j}=y_j(\sum^M_{m=1}{K_{m,c_j}(\bm x_j,\bm x_j')+b})\geq1-\xi_j,\xi_j> 0
\end{equation}
where $\xi_j$ indicates the slack variable, $\lambda$ is the positive regularization parameters, $C$ is the penalty term.

One problem of label tree is that the error propagation may have negative influence on the classification result. If a classification error happens on the parent node, the prediction of this sample will be wrong because the labels of leaf nodes under this misclassified parent node are all incorrect. In order to resolve this problem, we add an inter-level constraint to the SVM plus classifiers. Our strategy is that samples should be classified correctly at the parent level(level:$l+1$) if we want it to be further classified at the current node level(level:$l$). Thus, we add a constraint to guarantee that the score of current node classifier must be larger than the score of its parent node classifier, which can be denoted by Eq.(\ref{equ:svmConstraintwithLevelError2}). Therefore, we extend Eq.(\ref{equ:svmOptimizewithoutConstraint},\ref{equ:svmConstraintwithoutLevelError}) to:
\begin{equation}\label{equ:svmOptimizewithConstraint}
\min_{f_0,f_{c_j},b,\xi,d}
\frac{1}{2}{\|f^l_{0}\|^2}+\frac{\lambda}{2}{\sum^R_{j=1}{\sum^M_{m=1}{\frac{1}{d_m}\|f^l_{c_j}\|^2}}}
+C\sum^R_{j=1}{\xi_j}
\end{equation}
\textbf {subject to:}
\begin{equation}\label{equ:svmConstraintwithLevelError1}
\forall^R_{j=1}: y_jf_{c_j}=y_j(\sum^M_{m=1}{K_{m,c_j}(\bm x_j,\bm x_j')+b})\geq1-\xi_j,\xi_j> 0
\end{equation}
\begin{equation}\label{equ:svmConstraintwithLevelError2}
\forall^R_{j=1}: {f^l_{c_j}(x_j)}\geq f^{l+1}_{c_i}(x_j),(x_j,y_j)\in c_j\in c_i
\end{equation}

\section{Experiment}
\label{sec:experiment}

\subsection{Datasets and Experimental Settings}
\label{sec:experimentDatasetandSettings}

We use \emph{CIFAR-100}~\cite{Krizhevsky2009Learning} and \emph{ILSVRC2012}~\cite{deng2009imagenet} to evaluate the performance of the proposed classification method. \emph{CIFAR-100} has $60000$ images of $100$ categories. Each category has $600$ images in which $500$ for training and $100$ for validation. Divided into a training set and a validation set, \emph{ILSVRC2012} has over $120$ million images of $1000$ categories and is commonly used to evaluate  image classification algorithms. We use the training set for training and validation set for testing. The Mean Accuracy (MA) $(\%)$~\cite{zheng2017hierarchical} is used to capture the performance of each method. A PC with Intel Core i7 and 64GB memory is utilized to run all experiments.

\subsection{Comparison of different tree classifiers}
\label{sec:experimentDifferentTreeClassifiers}

In this section, we compare the classification accuracy of our proposed VCLT classifier with those of other state-of-the-art tree classifiers. Trained and tested with \emph{CIFAR-100} and \emph{ImageNet} datasets, we compare the MA of each of the following tree classifiers: {\bf semantic ontology}~\cite{li2010building}, {\bf label tree}~\cite{bengio2010label}, {\bf visual tree}~\cite{fan2015hierarchical} and the {\bf enhanced visual tree}~\cite{zheng2017hierarchical}. In order to train and test each model, we employ the \emph{DeCAF}~\cite{Jia2014Caffe} features extracted from the FC6 layer of the AlexNet model (its first FC layer) and the classification accuracy of each tree classifier (quantified in MA) is demonstrated in Table \ref{tab:accuracyOfDifferentTreeClassifiers}.


\begin{table}[h]
	\centering
	\begin{tabular}{|l|c|c|}
		\hline
		Approaches&CIFAR-100&ImageNet\\
		\hline
		Semantic ontology&$48.95\%$&$44.47\%$\\
		Label tree&$52.04\%$&$49.64\%$\\
		Visual tree&$51.30\%$&$51.03\%$\\
		Enhanced visual tree&$54.33\%$&$58.76\%$\\
		Visual confusion label tree&$\mathbf{58.63\%}$&$\mathbf{61.18\%}$\\
		\hline			
	\end{tabular}
	\caption{Classification accuracy of different tree classifiers.}\label{tab:accuracyOfDifferentTreeClassifiers}
\end{table}

From Table \ref{tab:accuracyOfDifferentTreeClassifiers}, we find the performance of Semantic ontology is the worst because its tree structure is constructed based on semantic space and the image classification process based on feature space. For another four methods based on feature space, the performance of Label tree is worse because of using OvR classifier to construct its tree structure, which is limited to sample imbalance and the performance of classifier. As for Visual tree, it uses the average features extracted directly from the dataset. Enhanced visual tree adopts the spectral clustering method that better reflects the diversity of categories, so its performance is better than the Visual tree. Our VCLT constructs the tree structure based on the confusion of the CNN model, which makes sibling nodes as close as possible and the parent nodes as far as possible. So this tree structure is more proper and we obtain a significant improvement over the Enhanced visual tree by $4.3\%$ and $2.4\%$.

\subsection{Comparison between our tree classifier and CNN models}
\label{sec:experimentCNNModels}

In this section, we compare the classification accuracy and test time of our VCLT with those of the corresponding CNN model. We choose AlexNet and VGG-Verydeep-16(VGG16) for this comparison. In the AlexNet-based experiment, we firstly train an AlexNet using the \emph{CIFAR-100} dataset. Then we employ the \emph{DeCAF} features extracted from the FC6 layer of AlexNet to train the corresponding VCLT classifier. The classification accuracy and test time of both are shown in Table \ref{tab:accuracyComparedOrgCNNCIFAR}. For a CNN model, the ``test time'' in Table \ref{tab:accuracyComparedOrgCNNCIFAR} is the average running time of its FC layers when processing one image. For the VCLT classifier, the ``test time'' is the average running time of the whole tree classifier when processing one image. As for the VGG16-based experiment, we do the same thing except using the features extracted from the FC14 layer of VGG16 to train the VCLT classifier. Table \ref{tab:accuracyComparedOrgCNNIMAGENET} illustrates this comparison using the \emph{ImageNet} dataset.



\begin{table}[h]
    \centering
    \begin{tabular}{|l|c|c|c|}
        \hline
        Approaches&Accuracy&Test time (ms)&Speedup\\
        \hline
        AlexNet         &   $54.02\%$   &   $3.4215$     &  -\\
        VCLT\_AlexNet&   $\mathbf{58.63\%}$   &    $\mathbf{0.0275}$    &$\mathbf{124\times}$\\
        \hline
        VGG16&$72.21\%$&$4.3713$&-\\
        VCLT\_VGG16&$\mathbf{72.23\%}$&$\mathbf{0.0381}$&$\mathbf{115\times}$\\
        \hline
    \end{tabular}
    \caption{Experiment results on \emph{CIFAR-100}}
    \label{tab:accuracyComparedOrgCNNCIFAR}
\end{table}

\begin{table}[h]
    \centering
    \begin{tabular}{|l|c|c|c|}
        \hline
        Approaches&Accuracy&Test time (ms)&Speedup\\
        \hline
        AlexNet&$57.24\%$&$5.5263$&-\\
        VCLT\_AlexNet&$\mathbf{61.18\%}$&$\mathbf{0.1493}$&$\mathbf{37\times}$\\
        \hline
        VGG16&$\mathbf{71.53\%}$&$6.7981$&-\\
        VCLT\_VGG16&$66.71\%$&$\mathbf{0.2427}$&$\mathbf{28\times}$\\
        \hline
    \end{tabular}
    \caption{Experiment results on \emph{ImageNet}}
    \label{tab:accuracyComparedOrgCNNIMAGENET}
\end{table}

From Table \ref{tab:accuracyComparedOrgCNNCIFAR} and Table \ref{tab:accuracyComparedOrgCNNIMAGENET}, we can see that the classification accuracy of VCLT with AlexNet is around 3\% higher than that of the original AlexNet on both \emph{CIFAR-100} and \emph{ImageNet} datasets. In addition, on both datasets, the speedup ratios achieved by replacing the FC layers of AlexNet with our tree classifier are significant ($124\times$ on \emph{CIFAR-100} and $37\times$ on \emph{ImageNet}). Though the accuracy improvement of VCLT with VGG16 is trivial on \emph{CIFAR-100} and even negative on \emph{ImageNet}, the speedup ratios are remarkable, achieving $115\times$ and $28\times$ respectively, which demonstrates VCLT's promising potential to accelerate CNN-based applications.

\begin{table}[h]
    \centering
    \begin{tabular}{|l|c|c|}
        \hline
        Approaches&Accuracy\\
        \hline
        DeepCom&$57.24\%$\\
        VCLT\_DeepCom&$\mathbf{60.73\%}$\\
        \hline
        BWN&$57.08\%$\\
        VCLT\_BWN&$\mathbf{57.11\%}$\\
        \hline
        XNOR&$42.37\%$\\
        VCLT\_XNOR&$\mathbf{42.48\%}$\\
        \hline
    \end{tabular}
    \caption{Results on \emph{ImageNet}(compressed CNN models)}
    \label{tab:accuracyComparedCompressedCNN}
\end{table}



\begin{figure}[t]
\begin{minipage}[t]{1.0\linewidth}
  \centering
  \centerline{\includegraphics[width=7.5cm]{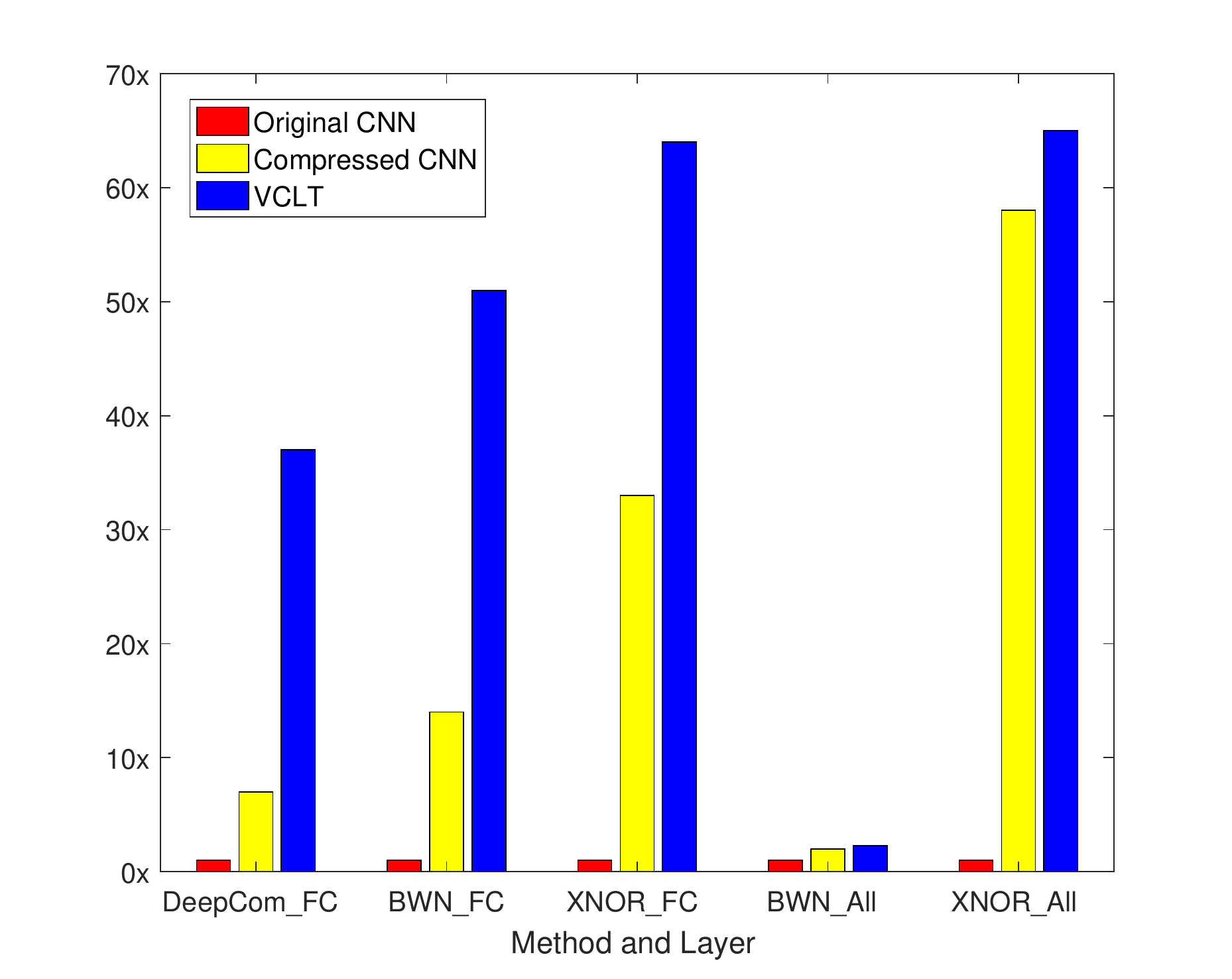}}
\end{minipage}
\caption{The VCLT for CIFAR-100 image set with 100 categories.}
\label{fig:speedupCompressed}
\end{figure}

Addtionally, we compare the classification accuracy and the speedup ratio of our VCLT with compressed CNN models. Here we choose the Binary-Weights-Net(BWN), XNOR-Net(XNOR)~\cite{Rastegari2016XNOR} and the DeepCompression Network(DeepCom)~\cite{Han2016Deep} for comparison. These compressed CNN models are based on AlexNet and we use their pre-trained models on \emph{ImageNet} in our experiment. The accuracy results are shown in Table \ref{tab:accuracyComparedCompressedCNN}, we find out that our VCLT has no accuracy decline.

The speedup ratio results are shown in Fig. 3. For DeepCom, followed~\cite{Han2016Deep}, comparisons of speedup mainly focus on FC layers and results are shown in \emph{DeepCom\_FC} in Fig. $3$. We find the speedup ratio of our VCLT$(37\times)$ is $30\times$ higher than DeepCom$(7\times)$ when compared with FC layers in the original AlexNet model. For BWN and XNOR, followed~\cite{Rastegari2016XNOR}, comparisons of speedup ratio focus on both FC layers and the entire network. The results are shown respectively in \emph{BWN\_FC}, \emph{XNOR\_FC}, \emph{BWN\_All} and \emph{XNOR\_All}. We find the speedup ratio of our VCLT is $37\times$ higher than BWN$(14\times)$ and $31\times$ higher than XNOR$(33\times)$ when compared on FC layers in the original AlexNet model. As for the entire network model, our VCLT$(2.3\times)$ obtain an improvement over the BWN$(2\times)$ by $0.3\times$ while VCLT$(65\times)$ over the XNOR$(58\times)$ by $7\times$ in terms of speedup ratio.

\section{Conclusion}

In this paper, we propose a method of replacing the fully-connected layers in CNN models with a tree classifier in image classification applications. We utilize the community detection algorithm to construct a Visual Confusion Label Tree based on the confusion characteristics of CNN models. Then, we use the multi-kernel SVM plus classifier with hierarchical constraints to train the tree classifier on the Visual Confusion Label Tree. Finally, we use this tree classifier to replace fully-connected layers in CNN models. The experimental results on \emph{CIFAR-100} and \emph{ImageNet} demonstrated the advantages of the proposed method over other tree classifiers and original CNN models such as AlexNet and VGG16.

\section{Acknowledgements}
This work was supported by the Natural Science Foundation of China under the grant No. U1435219, No. 61402507 and No. 61303070.

\bibliographystyle{IEEEbib}
\bibliography{camera-ready_icme2018template}

\begin{thebibliography}{10}

\bibitem{He2015Deep}
SQ. Ren J.~Sun KM.~He, XY.~Zhang,
\newblock ``Deep residual learning for image recognition,''
\newblock pp. 770--778, 2015.

\bibitem{Courbariaux2014Training}
JP.~David M.~Courbariaux, Y.~Bengio,
\newblock ``Training deep neural networks with low precision multiplications,''
\newblock {\em Computer Science}, 2014.

\bibitem{Rastegari2016XNOR}
J.~Redmon A.~Farhadi M.~Rastegari, V.~Ordonez,
\newblock ``Xnor-net: Imagenet classification using binary convolutional neural
  networks,''
\newblock pp. 525--542, 2016.

\bibitem{Han2016Deep}
WJ.~Dally S.~Han, H.~Mao,
\newblock ``Deep compression: Compressing deep neural networks with pruning,
  trained quantization and huffman coding,''
\newblock in {\em ICLR}, 2016.

\bibitem{Lin2013Network}
S.~Yan M.~Lin, Q.~Chen,
\newblock ``Network in network,''
\newblock {\em Computer Science}, 2013.

\bibitem{Springenberg2014Striving}
T.~Brox M.~Riedmiller JT.~Springenberg, A.~Dosovitskiy,
\newblock ``Striving for simplicity: The all convolutional net,''
\newblock {\em Eprint Arxiv}, 2014.

\bibitem{iandola2016squeezenet}
MW. Moskewicz K. Ashraf WJ. Dally K.~Keutzer FN.~Iandola, H.~Song,
\newblock ``Squeezenet: Alexnet-level accuracy with 50x fewer parameters and<
  0.5 mb model size,''
\newblock {\em arXiv preprint arXiv:1602.07360}, 2016.

\bibitem{li2010building}
Y.~Lim DM. Blei FF.~Li LJ.~Li, C.~Wang,
\newblock ``Building and using a semantivisual image hierarchy,''
\newblock in {\em CVPR 2010}. IEEE, 2010, pp. 3336--3343.

\bibitem{zhao2011large}
EP.~Xing B.~Zhao, FF.~Li,
\newblock ``Large-scale category structure aware image categorization,''
\newblock in {\em Advances in Neural Information Processing Systems}, 2011, pp.
  1251--1259.

\bibitem{deng2009imagenet}
R.~Socher LJ. Li K. Li FF.~Li J.~Deng, W.~Dong,
\newblock ``Imagenet: A large-scale hierarchical image database,''
\newblock in {\em CVPR 2009}. IEEE, 2009, pp. 248--255.

\bibitem{bengio2010label}
D.~Grangier S.~Bengio, J.~Weston,
\newblock ``Label embedding trees for large multi-class tasks,''
\newblock in {\em Advances in Neural Information Processing Systems}, 2010, pp.
  163--171.

\bibitem{liu2013probabilistic}
M.~Tappen O. Shamir C.~Liu B.~Liu, F.~Sadeghi,
\newblock ``Probabilistic label trees for efficient large scale image
  classification,''
\newblock in {\em Proceedings of the IEEE conference on computer vision and
  pattern recognition}, 2013, pp. 843--850.

\bibitem{fan2012quantitative}
N.~Zhou J. Peng R.~Jain J.~Fan, X.~He,
\newblock ``Quantitative characterization of semantic gaps for learning
  complexity estimation and inference model selection,''
\newblock {\em IEEE Transactions on Multimedia}, vol. 14, no. 5, pp.
  1414--1428, 2012.

\bibitem{fan2015hierarchical}
J.~Peng L.~Gao J.~Fan, N.~Zhou,
\newblock ``Hierarchical learning of tree classifiers for large-scale plant
  species identification,''
\newblock {\em IEEE Transactions on Image Processing}, vol. 24, no. 11, pp.
  4172--4184, 2015.

\bibitem{zheng2017hierarchical}
J.~Zhang X.~Gao Y.~Zheng, J.~Fan,
\newblock ``Hierarchical learning of multi-task sparse metrics for large-scale
  image classification,''
\newblock {\em Pattern Recognition}, vol. 67, pp. 97--109, 2017.

\bibitem{jinconfusion}
Y~Wang X~Niu R~Jin, Y~Dou,
\newblock ``Confusion graph: Detecting confusion communities in large scale
  image classification,''
\newblock .

\bibitem{deng2011fast}
AC. Berg FF.~Li J.~Deng, S.~Satheesh,
\newblock ``Fast and balanced: Efficient label tree learning for large scale
  object recognition,''
\newblock in {\em Advances in Neural Information Processing Systems}, 2011, pp.
  567--575.

\bibitem{blondel1008fast}
R.~Lambiotte E.~Lefebvre VD.~Blondel, JL.~Guillaume,
\newblock ``Fast unfolding of community hierarchies in large network, 2008,''
\newblock {\em J. Stat. Mech. P}, vol. 1008.

\bibitem{Wang2017Multiple}
Yueqing Wang, Xinwang Liu, Yong Dou, Qi~Lv, and Yao Lu,
\newblock ``Multiple kernel learning with hybrid kernel alignment
  maximization,''
\newblock {\em Pattern Recognition}, vol. 70, pp. 104--111, 2017.

\bibitem{Wang2016Multi}
Qiang Wang, Yong Dou, Xinwang Liu, Qi~Lv, and Shijie Li,
\newblock ``Multi-view clustering with extreme learning machine,''
\newblock {\em Neurocomputing}, vol. 214, pp. 483--494, 2016.

\bibitem{Alexander2000Probabilities}
Alexander~J. Smola, Peter Bartlett, Bernhard Schölkopf, and Dale Schuurmans,
\newblock ``Probabilities for sv machines,''
\newblock in {\em Advances in Large Margin Classifiers}, 2000, pp. 61--74.

\bibitem{Krizhevsky2009Learning}
A.~Krizhevsky,
\newblock ``Learning multiple layers of features from tiny images,''
\newblock 2009.

\bibitem{Jia2014Caffe}
J.~Donahue S. Karayev J.~Long Y.~Jia, E.~Shelhamer,
\newblock ``Caffe: Convolutional architecture for fast feature embedding,''
\newblock pp. 675--678, 2014.

\end{thebibliography}

\clearpage
\begin{appendixpage}
	\begin{appendices}
		\setcounter{equation}{0}
		\setcounter{table}{0}
		\setcounter{figure}{0}
		\section{Theoretical Analysis of the Effect on Accuracy when Tree Structure Changed}
		
		We consider a problem of three categories classification. We assume that the categories are A, B and C and we can construct four different tree structures, which are shown in Fig.1. According to the definition of the Label Tree, tree structure $T_1$ is more reasonable than others for the classification task.

		\begin{figure}[!hb]

        	\begin{minipage}[b]{1.0\linewidth}
        	  \centering
        	  \centerline{\epsfig{figure=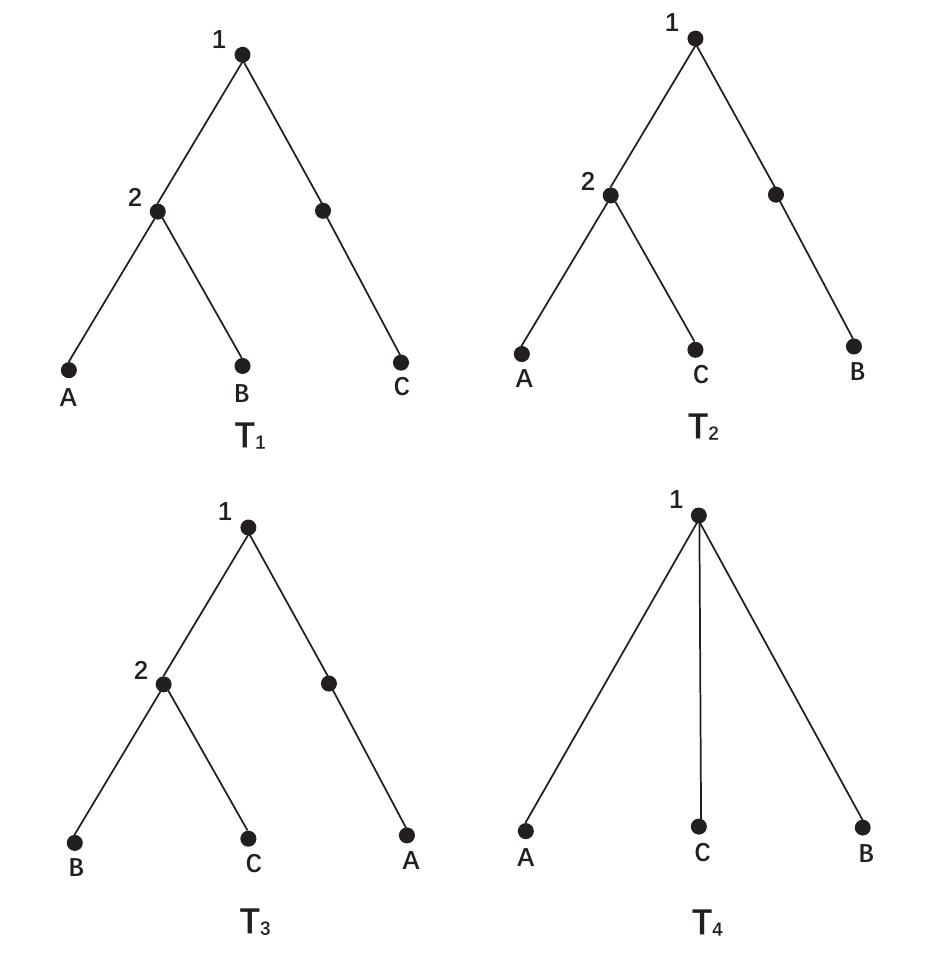,width=8cm}}
        	\end{minipage}
        	
        	\caption{Four tree structures.}
        	\label{fig:res}
        \end{figure}
				
		In order to prove it, we assume that we have trained classifiers on these tree structures. At each node of these tree structures, we use SVM for classifier and the SVM classifiers on these nodes are same. Here we assume the distances between every pair of three categories are $d_{AB}$, $d_{AC}$, $d_{BC}$ and $d_{BC}>d_{AC}\gg d_{AB}$, which means category A is similar to B while C is different from both of them. Ideally, the distances in high feature space which is projected by classifier(distance of SVM) among these categories is shown in Fig. 2.
		\begin{figure}[t]

        	\begin{minipage}[b]{1.0\linewidth}
        	  \centering
        	  \centerline{\epsfig{figure=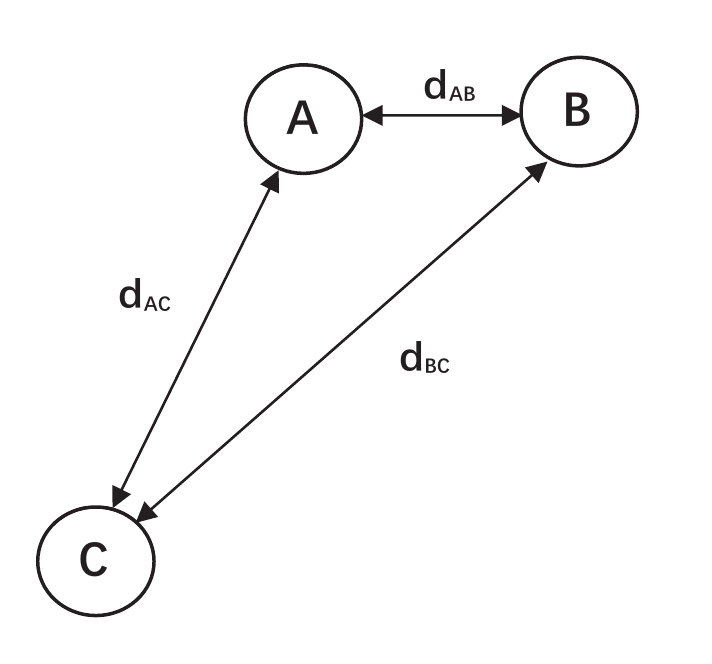,width=5cm}}
        	\end{minipage}
        	
        	\caption{The distances among category A, B and C.}
        	\label{fig:res}
        \end{figure}
		
		We will prove the tree structure $T_1$ is the most ideal tree structure. We choose one of $T_2$ and $T_3$ to be compared with $T_1$ because $T_2$ and $T_3$ are actually the same.

		\newtheorem{mypro}{Proposition}[section]
		
		\begin{mypro}
		Tree structure $T_1$ is better than $T_2$ and $T_4$.
		\end{mypro}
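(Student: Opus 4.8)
The plan is to compare the probability of correct classification under each tree structure, using the distances $d_{AB}, d_{AC}, d_{BC}$ as the governing quantities and the assumption $d_{BC} > d_{AC} \gg d_{AB}$. First I would set up a common model for the error of a single binary SVM node: if a node must separate two groups whose representatives are at distance $d$ in the projected (SVM) feature space, the misclassification probability at that node is some decreasing function $\varepsilon(d)$ of the margin, so that larger separation yields smaller error. The key structural observation is that the overall accuracy of a tree classifier is the product (over the root-to-leaf path of the true category) of the per-node success probabilities, because an error at any node on the path sends the sample irrecoverably to a wrong subtree (this is exactly the error-propagation phenomenon discussed in Section~\ref{sec:vcltTrain}).

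Next I would write down, for each of $T_1$, $T_2$, $T_4$, which binary decisions lie on the path to each leaf and which distance controls each decision. For $T_1$ — the structure that first splits $\{C\}$ from $\{A,B\}$ and then splits $A$ from $B$ — the top node is governed by the large distance (roughly $\min(d_{AC}, d_{BC}) = d_{AC}$, or the distance from $C$ to the $\{A,B\}$ cluster), and only the $A$-vs-$B$ leaf decision is governed by the small distance $d_{AB}$; crucially, a $C$-labeled sample only has to pass the easy top node. For $T_2$ (say, first split $A$ from $\{B,C\}$, then split $B$ from $C$) the top node must separate $A$ from the cluster $\{B,C\}$, and since $A$ is close to $B$ this node is governed by the small distance $d_{AB}$, so it is error-prone, and moreover every sample (including $C$-labeled ones) must survive this bad node. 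For $T_4$ (the fully flat / one-vs-rest style structure, or a degenerate tree) I would similarly identify that at least one decision separating $A$ from $B$ must be made at a shallow level affecting more categories. Then I would bound: $\mathrm{Acc}(T_1) \ge \varepsilon(d_{AC})$-type terms times a single $(1-\varepsilon(d_{AB}))$ factor appearing on only one leaf path, whereas $\mathrm{Acc}(T_2)$ and $\mathrm{Acc}(T_4)$ carry the factor $(1-\varepsilon(d_{AB}))$ on every leaf path (or carry a worse combination), so that $\mathrm{Acc}(T_1) - \mathrm{Acc}(T_2) > 0$ and $\mathrm{Acc}(T_1) - \mathrm{Acc}(T_4) > 0$ follow by comparing the products term by term and using monotonicity of $\varepsilon$ together with $d_{AB} \ll d_{AC} < d_{BC}$.

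The remaining step is to make the term-by-term comparison rigorous: I would factor the difference of the two products, group the common factors, and reduce everything to an inequality of the form $(1-\varepsilon(d_{AC}))(1-\varepsilon(d_{AB})) + (\text{$C$-path terms for }T_1) > (1-\varepsilon(d_{AB}))^2 + \cdots$, then cancel and invoke $\varepsilon(d_{AC}) < \varepsilon(d_{AB})$. I expect the main obstacle to be exactly this bookkeeping under a clean enough error model: one has to choose the per-node error function and the averaging over the three (assumed equiprobable) class priors so that the inequalities are both true and transparent, and one has to handle $T_4$ carefully since, unlike $T_2$, it is not literally a relabeling of $T_1$ — its path lengths differ, so the comparison there is between a depth-$2$ balanced tree and a flatter structure and needs the observation that the extra or shallower $A$-vs-$B$ test still dominates the error. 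Once the model is fixed, the algebra is routine; getting a model that is simultaneously faithful to the SVM-margin picture of Fig.~2 and simple enough to yield the strict inequality is the real work.
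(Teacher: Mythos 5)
Your skeleton matches the paper's: per-class accuracy is the product of per-node success probabilities along the root-to-leaf path, the tree's accuracy is the sum over the three classes, and the comparisons reduce to monotonicity in the governing distances. Your $T_1$ versus $T_2$ step is sound and is exactly the paper's computation in different clothing: both trees share the two-factor terms for the classes in the grouped pair, and the difference collapses to a single comparison of the lone-class term, $\varepsilon(d_{AB})-\varepsilon(d_{AC})>0$ in your notation, $k(d_{AC}-d_{AB})>0$ in the paper's.

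The genuine gap is the $T_1$ versus $T_4$ comparison, which you defer to ``bookkeeping'' but which does not go through under the model you set up. In the flat structure the decision for a $C$-labeled sample is governed by $d_{AC}$, not $d_{AB}$, so your claim that $T_4$ ``carries the factor $(1-\varepsilon(d_{AB}))$ on every leaf path'' is false: the natural flat model is $P_A=P_B=1-\varepsilon(d_{AB})$, $P_C=1-\varepsilon(d_{AC})$, which is precisely the paper's model for $T_4$. Comparing with $T_1$, where $P_A=P_B=(1-\varepsilon(d_{AC}))(1-\varepsilon(d_{AB}))$ and $P_C=1-\varepsilon(d_{AC})$, the difference is $-2(1-\varepsilon(d_{AB}))\,\varepsilon(d_{AC})<0$: because your per-node success probabilities are bounded by $1$, inserting an extra (even easy) node on the $A$ and $B$ paths can only hurt, and your more principled model actually yields $P_{T_1}<P_{T_4}$. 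The paper escapes this only because it posits success probability \emph{proportional to distance}, so that the two-factor product $d_{AC}d_{AB}$ can exceed the single factor $d_{AB}$, and even then it must impose the dimensionally ad hoc condition $d_{AC}>1$ (``$d_{AC}$ is very large'') to conclude $P_{T_1}>P_{T_4}$. So to complete your argument you would have to either adopt the paper's unbounded proportionality model together with its extra largeness assumption, or change the modeling of the flat node (e.g., make every class in $T_4$ pay for separation from \emph{both} other classes, compounding $\varepsilon(d_{AB})$ and $\varepsilon(d_{AC})$ on each path); as written, the term-by-term comparison you propose for $T_4$ has the wrong sign.
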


		\begin{proof}
		
		For $T_1$, we assume that $P^1_{AB}$ denotes the probability of classifying samples in A and B from all the samples correctly at the SVM on the node $1$. Then we define $P^1_C$, $P^2_A$, $P^2_B$ and so on. In addition, we define $P_A$, $P_B$ and $P_C$ as the probability of classifying the three categories from all the samples correctly. Here we know:
		
		\begin{equation}
			\begin{array}{l}
			P_A = P^1_{AB}\times P^2_A \\
			P_B = P^1_{AB}\times P^2_B \\
			P_C = P^1_C
			\end{array}
		\end{equation}
		
		Because the probability that SVM makes a correct classification is in proportion to the distance of SVM. So we get:
		
		\begin{equation}
			\begin{array}{l}
			P_A\propto d_{AC}\times d_{AB} \\
			P_B\propto d_{AC}\times d_{AB} \\
			P_C\propto d_{AC}
			\end{array}
		\end{equation}
		
		Similarly, for $T_2$, we get:
		
		\begin{equation}
			\begin{array}{l}
			P_A\propto d_{AB}\times d_{AC} \\
			P_B\propto d_{AB} \\
			P_C\propto d_{AB}\times d_{AC}
			\end{array}
		\end{equation}
		
		For $T_4$, we get:
		
		\begin{equation}
			\begin{array}{l}
			P_A\propto d_{AB} \\
			P_B\propto d_{AB} \\
			P_C\propto d_{AC}
			\end{array}
		\end{equation}
		
		And we know the probability of correct classification for a tree structure $T$ can be defined as:
		
		\begin{equation}
			P_{T}=P_A+P_B+P_C
		\end{equation}

		The probabilities for $T_1$, $T_2$ and $T_4$ are:
		
		\begin{equation}
        \label{equ:pWithoutk}
			\begin{array}{l}
			P_{T_1}\propto (2d_{AC}\times d_{AB}+d_{AC}) \\
			P_{T_2}\propto (2d_{AC}\times d_{AB}+d_{AB}) \\
			P_{T_3}\propto (2d_{AB}+d_{AC})
			\end{array}
		\end{equation}

		The SVM classifiers on the nodes of a tree structure are same so the probabilities in Eq.(\ref{equ:pWithoutk}) have a same proportion, we denote it as $k$:
		
		\begin{equation}
			\begin{array}{l}
			P_{T_1}=k(2d_{AC}\times d_{AB}+d_{AC}) \\
			P_{T_2}=k(2d_{AC}\times d_{AB}+d_{AB}) \\
			P_{T_3}=k(2d_{AB}+d_{AC})
			\end{array}
		\end{equation}
		
		For $P_{T_1}$ and $P_{T_2}$:
		
		\begin{equation}
			\begin{array}{l}
			P_{T_1}-P_{T_2}=k(d_{AC}-d_{AB}) \\
			\because d_{AC}>d_{AB} \\
			\therefore P_{T_1} > P_{T_2}
			\end{array}
		\end{equation}
		
		For $P_{T_1}$ and $P_{T_4}$:
		
		\begin{equation}
			P_{T_1}-P_{T_4}=k(2d_{AB}(d_{AC}-1))
		\end{equation}

		If $d_{AC}>1$ then $P_{T_1}>P_{T_4}$, which demands $d_{AC}$ is very large. From the assumption we know that category C is far from category B and C in the distance of SVM, so $P_{T_1}>P_{T_4}$.

		In summary, Tree structure $T_1$ is better than $T_2$, $T_3$ and $T_4$.
		
		\end{proof}

		\section{Theoretical Analysis of the speedup ratio on replacing Fully-Connected layers with the Tree Classifier}
		In this section we talk about the speedup ratio of our method in theory. Here we take AlexNet on \emph{CIFAR-100} and \emph{ImageNet} datasets as the analysis object and we replace fully-connected(FC) 7 and FC8 layers with the tree classifier. As we all know, each layer of the FC layers is actually a vector inner product process, which can be defined as:
		\begin{equation}
		Out_m=\sum^N_{i=1}{C(w_m,f_i)}+b_m,m=1,2,...,M
		\end{equation}
		with:
		\begin{equation}
		C(w,f)=\sum^n_{i=1}{\sum^n_{j=1}{w_{ij}f_{ij}}}
		\end{equation}
		where $Out$ is the output of the FC layer, $w$ is the convolution kernel, $f$ is the input of the FC layer, $b$ is the bias, their subscript $m$ denotes that it is the output of the $m$th channel and there are $M$ channels in total. $C$ means convolution operation and $C(w,f)$ denotes a convolution operation between kernel $w$ with a size of $n\times n$ and a feature map $f$. As for FC7 and FC8, the inputs both are feature maps with a size of $1\times 1\times 4096$ and the outputs are a feature map with the size of $1\times 1\times 4096$ and a score vector with the size of $1\times 1\times 1000$. We can calculate the computation using the equation mentioned above.
		
		If we replace the FC layers with our tree classifier, then we should calculate the computation of the tree classifier. The tree classifiers have a hierarchical structure. For one node there is a classifier on each of the child nodes under it. Each classifier under the same parent node computes a result of one test image and the parent node selects which branch to go by comparing all these results. We repeat this process from root node to leaf nodes. Finally the category on the selected leaf node is the classification result. Therefore, the number($N$) of classifiers involved in the classification process is equal to the sum of the number of child nodes under all the nodes in the path from the root node to a specific leaf node. Here we find that the worst situation is the path with the most child nodes and we find this specific number of CIFAR-100 and ImageNet is $18$ and $65$. The dimension($d$) of the features which is used for the classifier is $4096$. Therefore, we can calculate the multi-adds computation of the tree classifier on \emph{CIFAR-100} dataset is:

        \begin{equation}
            2Nd=2\times 18\times 4096 = 147456
        \end{equation}
Similarly, we can calculate the computation on \emph{ImageNet} dataset is $524288$.
		
		We make a statistic comparison in Table \ref{tab:theoryComputaion}.
		
		\begin{table}[t]
		    \centering
		    \begin{tabular}{|l|l|l|}
		        \hline
		        Classifier & CIFAR-100 & ImageNet \\
		        \hline
		        FC & 34 & 42 \\
		        Tree Classifier & 0.14 & 0.52 \\
		        Speedup & 233$\times$ & 78$\times$\\
		        \hline
		    \end{tabular}
		    \caption{Comparison of the computation on different datasets in theory(unit:million mult-adds)}
		    \label{tab:theoryComputaion}
		\end{table}

	\end{appendices}
\end{appendixpage}

\end{document}